\def\mba{\mathbf{a}}
\def\dR{\mathbb{R}}
\def\cA{\mathcal{A}}
\def\cM{\mathcal{M}}
\def\cP{\mathcal{P}}
\def\cS{\mathcal{S}}
\def\cT{\mathcal{T}}
\def\Qe{Q^{\rm exec}}
\newdimen\paravsp \paravsp=1.3ex
\def\paradot#1{\vspace{\paravsp plus 0.5\paravsp minus 0.5\paravsp}\noindent{\bf\boldmath{#1}}}
\newtheorem{definition}{Definition}
\newtheorem{proposition}{Proposition}
\def\argmax{\mathop{\arg\max}}
\def\E{{\mathbb E}} 
\def\R{{\cal R}}
\def\A{{\cal A}}
\def\S{{\cal S}}
\def\R{{\cal R}}
\def\a{{\bf a}}
\begin{document}

\title{Deep Reinforcement Learning with Attention for\\Slate Markov Decision Processes with High-Dimensional States and Actions}
\author{Peter Sunehag, Richard Evans, Gabriel Dulac-Arnold,\\ Yori Zwols, Daniel Visentin and Ben Coppin\\Google DeepMind\\London UK\\sunehag@google.com} 
\maketitle
\begin{abstract}
Many real-world problems come with action spaces represented as feature vectors. Although high-dimensional control is a largely unsolved problem, there has recently been progress for modest dimensionalities. Here we report on a successful attempt at addressing problems of dimensionality as high as $2000$, of a particular form. Motivated by important applications such as recommendation systems that do not fit the standard reinforcement learning frameworks, we introduce Slate Markov Decision Processes (slate-MDPs). A Slate-MDP is an MDP with a combinatorial action space consisting of slates (tuples) of primitive actions of which one is executed in an underlying MDP. The agent does not control the choice of this executed action and the action might not even be from the slate,  e.g., for recommendation systems for which all recommendations can be ignored. We use deep Q-learning based on feature representations of both the state and action to learn the value of whole slates. Unlike existing methods, we optimize for both the combinatorial and sequential aspects of our tasks.  The new agent's superiority over agents that either ignore the combinatorial or sequential long-term value aspect is demonstrated on a range of environments with dynamics from a real-world recommendation system. Further, we use deep deterministic policy gradients to learn a policy that for each position of the slate, guides attention towards the part of the action space in which the value is the highest and we only evaluate actions in this area. The attention is used within a sequentially greedy procedure leveraging submodularity. Finally, we show how introducing risk-seeking can dramatically improve the agents performance and ability to discover more far reaching strategies.
\end{abstract}

\section{Introduction}
Reinforcement Learning (RL) \cite {SB98} is a paradigm for learning through trial-and-error while interacting with an unknown environment. The interaction happens in cycles during which the agent chooses an action and the environment returns an observation together with a real-valued reward. The agent's goal is to maximize long-term accumulated reward. RL has had many successes, including autonomous helicopter control \cite{Ng04} and, recently, mastering a wide range of Atari games \cite{dqn15} (with a single agent) and a range of physics control tasks \cite{ddpg}. 

Although these are impressive accomplishments, the Atari games only contain $18$ actions and while the physics control tasks have continuous action spaces they are of limited dimensionality (below $10$). Our work addresses combinatorial action spaces represented by feature vectors of dimensionality up to $2000$, but with useful extra structure naturally present in the applications of interest. We consider the application of RL to problems such as  recommendation systems \cite{Park12} in which a whole slate (tuple) of actions is chosen at each time point. While these problems can be modeled with MDPs with combinatorial action spaces, the extra structure that is naturally present in the applications allows for tractable approximate value maximization of the slate.

\paradot{Slate Markov Decision Processes (Figure 1)}
We address RL problems that are such that at each time point the agent picks a fixed number of actions from a finite set $\A$. We refer to a tuple of actions as a slate. The slates are ordered in our formalization, but as a special case one can have an environment that is invariant to this order. In our environments only one action from the slate is executed. For example, in the recommendation system case, the user's choice when given the recommendations, is the execution of an action. We assume that we are given an  underlying traditional RL problem, e.g.\ a Markov Decision Process (MDP) \cite{puterman}.
\begin{figure}\label{slates}
\begin{center}
\includegraphics[width=0.5\textwidth]{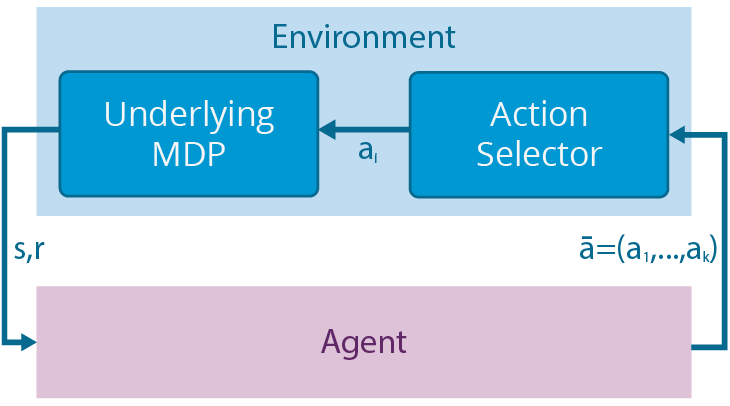}
\end{center}
 \caption{Slate-MDP Agent-Environment Framework.}
\end{figure}

 In an MDP, we observe a state $s_t$ at each time point $t$ and an action $a_t$  is executed; the next state $s_{t+1}$ and reward $r_t$ (here non-negative) received are independent of what happened before time $t$. In other words, we assume that $s_t$ summarizes all relevant information up to time $t$.

The key point of slate-MDPs is that, instead of taking one action, the agent chooses a whole slate of actions and then the environment chooses which one to execute in the underlying MDP; see Figure 1. The state information received tells the agent what action was executed, but not what would have happened for other actions. Slate-MDPs have important extra structure compared to the situations in which all the actions are executed and each full slate is its own discrete action in an enormous action space. 

 We investigate model-free agents that directly learn the slate-MDP value function for full slates. A simpler approach that is often deployed in large scale applications is to learn the values of individual actions and then combine the individually best actions. We present experiments that show  serious shortcomings of this simple approach that completely ignores the combinatorial aspect of the tasks. When extra actions are added to a slate these might interfere with the execution of the highest value action. An agent that learns a slate value function is less harmed by this and can in principle learn beneficial slate patterns such as diversity without, unlike methods such as maximum marginal relevance \cite{Carbonell98} in information retrieval, being given a mathematical definition and a constant specifying the amount of diversity to introduce.

The main drawback of full slate agents is the number of evaluations of the value function needed for producing a slate based on them. Therefore, we also investigate the option of learning a parameterized policy (a neural network) using deterministic policy gradients  \cite{silver14,ddpg,Gabe} to guide attention towards areas of the action space in which the value function is high. The neural network policy is combined with a nearest-neighbor lookup and an evaluation of the value function on this restricted set.

\paradot{Related work} As far as we are aware, \cite{Fard11} is the only work that picks multiple actions at a time for an MDP. They studied known MDPs, aiming to provide as many actions as possible while remaining nearly optimal in the worst case. Besides that they assume a known MDP, their work differs critically from our article in that they always execute an action from the slate and that they focus on the worst case choice. We work with action-execution probabilities and do not assume that any action from the slate will be executed.  Further, we work with high-dimensional feature representations and aim for a scale at which achieving guaranteed near-optimal worst case behavior is not feasible. Other work on slate actions \cite{Kale10,Yue11,Kveton14} has focused on the bandit setting in which there are no state transitions. In these articles, rewards are received and summed for each action in the slate. In our slate-MDPs, the reward is only received for the executed action and we do not know what the reward would have been for the other actions in the slate.  In the recommendation systems literature \cite{Park12}, the focus of most work is on the immediate probability of having a recommendation accepted or being relevant, and not on expected value, which is successfully optimized for here. \cite{Sha05} is an exception and optimizes for long-term value within an MDP framework, but treats individual recommendations as independent just like the agents we employ as a baseline and outperform with our full slate agents. \cite{Hado09} used continuous control methods for discrete-action problems for which the actions are embedded in a feature space and the nearest discrete action is executed. This continuous control for discrete reinforcement learning approach is in a different way (as we present at greater length for RL with large action spaces in \cite{Gabe}) utilized here as an attention mechanism.

\section{Reinforcement Learning with Slate Actions} \label{back}
As is common in reinforcement learning \cite{SB98}, we work with an agent-environment framework \cite{RN10} in which an agent chooses actions $a\in\mathcal {A}$ executed in an environment and  receives observations $ o\in\mathcal{O} $ and real-valued rewards $r\in\mathbb{R}$. A sequence of such interactions is generated cyclically at times $t=1,2,3,\hdots $. If the Markov property $\Pr(o_t,r_t\ |\ a_1,o_1,r_1, \hdots, a_{t-1},o_{t-1},r_{t-1},a_t ) = \Pr(o_t,r_t\ |\ o_{t-1},a_t)$ is satisfied, then the environment is called a Markov Decision Process (MDP) \cite{puterman} and the observations $o_t$ can be viewed as states $s_t$. 

An MDP is defined by a tuple $ \langle \mathcal{S,A,T},\R \rangle $ of a state space $\mathcal{S}$, an action space $\mathcal{A}$, a reward function $\R : \mathcal{S \times A\times S }\to \mathcal{P}(\mathbb{R})$ where $\mathcal{P(X)}$ denotes the probability distributions over the set $\mathcal{X}$, and a transition function $\mathcal{T}:\mathcal{S\times A} \rightarrow \mathcal{P(S)}$. We will use $\bar{\R}$ to denote the expected value of $\R$. A stationary policy is a function $\pi:\mathcal{S}\to\mathcal{P(A)}$ from states to probability distributions over $\mathcal{A}$.

The agent is designed to accumulate as much reward as possible, which is often addressed by aiming for maximizing expected discounted reward for a discount factor $\gamma$. In this article we primarily work with episodic environments where the discounted rewards are summed to the end of the episode. Our agents learn $Q^\pi(s,a) = \E \{\R_t|s_t=s,a_t=a,\pi\}$ for a policy $\pi$, where $\R_t = \sum_{j=0}^{t_{end}-t}{\gamma^jr_{t+j}}$ and $t_{end}$ is the end of the episode. Ideally, we want to find $Q^*(s,a) = \max_\pi Q^\pi(s,a)$ because then acting according to $\pi(s)=\argmax_a Q^*(s,a)$ is optimal. One method for achieving this is Q-learning which is based on the update $Q_{i+1}(s_t,a_t)=(1-\eta_t) Q_i(s_t,a_t)+\eta_t(r_t+\gamma \max_a Q_i(s_{t+1},a))$, where $\eta_t\in (0,1)$ is the learning rate. The update translates to a parameter update rule also for the case with a parameterized instead of a tabular value function. We employ the $\varepsilon$-greedy approach to action selection based on a value function, which means that with $1-\varepsilon$ probability we pick $\argmax_a Q_i(s,a)$ and with probability $\varepsilon$ a random action. Our study focuses on deep architectures for the value function similar to those used by \cite{dqn15,ddpg} and our approach incorporates the key techniques of target networks and experience replay employed there.

\paradot{Slate Markov Decision Processes (slate-MDPs)}
In this section we formally introduce slate-MDPs as well as some important special cases that enable more efficient inference.

\begin{definition}[slate-MDP]
Let $\cM = \langle \cS, \cA, \cT, \R \rangle$ be an MDP.
Let $\varphi\colon \cS\times \A^l \to \cA$.
Define $\cT'\colon \cS\times \A^l\times \cS \to \cP(\cS)$ and
$\R'\colon \cS\times \A^l \times \cS \to \cP(\dR)$ by
\[ \cT'(s, \mba, s') = \cT(s, \varphi(\mba), s'), \R'(s, \mba, s') = \R(s, \varphi(\mba), s'). \]
The tuple $\langle \cS, \A^l, \cT', \R' \rangle$ is
called a \emph{slate-MDP} with \emph{underlying MDP} $\cM$ and 
\emph{action-execution function} $\varphi$.
We assume that the previous executed action can be derived
from the state through a function $\psi\colon \cS\to \cA$.
\end{definition}

Note that any slate-MDP is itself an MDP with a special
structure. In particular, the probability distribution of the
next state $s'$ and the reward $r'$ conditional on the current
state $s$ and action $\mba$ can be factored as:
$$\underbrace{\Pr(s', r'| s, \mba)}_{\R'\circ \cT'} = \sum_{a\in\cA} \underbrace{\Pr(s', r' | s, a)}_{\R\circ \cT} \underbrace{\Pr(a | s, \mba)}_{\varphi}.$$
The expected reward for a slate-MDP can be computed as
$$\bar{\R}(s,\a)=\sum_{a\in\A,s'\in\S}\Pr(s'\ |\ s,a)\Pr(a\ |s,\a)\bar{\R}(s,a,s')$$ 
If we let $Q^{\rm exec}(s,a,s')=\Pr(s' |\ s,a)(\bar{\R}(s,a,s')+\gamma V^{\pi}(s'))$,
we have the following identity for the state-slate value function of the slate-MDP: 
\begin{equation}\label{eqB}
Q^{\pi}(s,\a)=\sum_{s'\in\mathcal{S},a\in\A}\Pr(a |\ s,\a)Q^{\rm exec}(s,a,s')
\end{equation}
for any slate policy $\pi:\mathcal{S}\to \mathcal{A}^l$.

We do not require that the executed action $\psi(s_{t+1})$ is an element of $\a_t$, but in the environment we create that will be the case for ``good'' slates $\a\in\mathcal{A}^l$ (Execution-Is-Best in Definition \ref{EIB}). In the recommendation system setting, $\psi(s_{t+1})\in\a_t$ means that a recommendation was selected by the user. We formally define what it means that having actions from the slate executed is the best outcome, by saying that the value-order between policies coincides with that of a modified version of the environment in which $\psi(s_{t+1})\notin \a_t$ implies that the episode ends with zero reward. We call the latter property the \emph{fatal failure} property.

\begin{definition}[Value-order, Fatal Failure, Execution-Is-Best (EIB)]\label{EIB}
Let $\mu$ and $\nu$ be two environments with the same state space $\S$ and action space $\A$.\\
{\bf Value order:} If, for any pair of policies $\pi$ and $\tilde{\pi}$,
$$V^\pi_{\mu}(s)\geq V^{\tilde{\pi}}_{\mu}(s) \iff V^\pi_{\nu}(s)\geq V^{\tilde{\pi}}_{\nu}(s)\ \forall s,$$
then we say that $\mu$ and $\nu$ have the same value-order.\\
\indent Further, suppose there is $s_{\rm end}\in\S$ such that  $\nu(s_{\rm end},r'=0 |\ s_{\rm end},\a)=1$.\\
{\bf Fatal Failure:} If $\nu(s_{\rm end},r'=0 |\ s,\a)=1$ whenever $\psi(s')\notin\a$, then we say that $\nu$ has fatal failure.\\
\indent Suppose that $\nu(s',r'\ |\ s,\a)=\mu(s',r'\ |\ s,\a)$ if $\psi(s')\in\a$, i.e., the environments coincide for executed slates.\\
{\bf EIB:} If $\nu$ has fatal failure and $\mu$ has the same value-order as $\nu$, then we say that $\mu$ has Execution-Is-Best (EIB) property.\\ 
\end{definition}

To be able to identify a value-maximizing slate in a large-scale setting, we need to avoid a combinatorial search.
The first step is to note that if an environment has the fatal failure property, then $\sum_{a\in\A}$ can be replaced by $\sum_{a\in\a}$ in \eqref{eqB}. In other words, only terms corresponding to actions in the slate are non-zero. While this condition does not hold in our environments, the EIB assumption is natural and implies that one can perform training for an environment $\nu$ modified as in Definition \ref{EIB}. Although the sum with fewer terms is easier to optimize, the problem is still combinatorial and does not scale. Therefore, monotonicity and submodularity are interesting to us since if $f\colon\cup_{j=0}^l \A^j\to\mathbb{R}$ is monotonic and submodular, we can sequentially greedily choose a slate $\a_{\rm greed}$ and $f(\a_{\rm greed})\geq (1-1/e)\max_\a f(\a)$ \cite{submod}.

\begin{definition}[Monotonic and Submodular]
We say that a function $f:\cup_{j=0}^l \mathcal{X}\to\mathbb{R}$ for $\mathcal{X}\subset \A^l$ is\\
{\bf Monotonic} if $\forall a,a_1,\hdots,a_i\in\A$ it holds that $f((a_1,\hdots,a_i,a))\geq f((a_1,\hdots,a_{i}))$ and\\ {\bf Submodular} if (diminishing returns) $$f((a_1,\hdots,a_i,a))-f((a_1,\hdots,a_{i}))\leq f((a_1,\hdots,a_{i-1},a))-f((a_1,\hdots,a_{i-1}))$$ holds for all $a,a_1,\hdots,a_i\in\A$. 
\end{definition}

To guarantee monotonicity and submodularity we introduce a further assumption that we call sequential presentation since it is satisfied if the action-selection happens sequentially in the environment, e.g, if recommendations are presented one-by-one to a user or if the users are assumed to inspect them in such a manner. Although, our environments do not have sequential presentation, the sequentially greedy procedure works well. When we evaluate the choice of a first recommendation we look at it in the presence of the other recommendations provided by a default strategy. This brings our setting closer to sequential recommendations. 

\begin{definition}[Sequential Presentation] 
We say that a slate-MDP has sequential presentation if for all states $s$ its action-execution probabilities satisfy
\begin{equation}\label{eq:seqpr}
\Pr(a | s,(a_1,\hdots,a_i,a,a_{i+1},\hdots))=\Pr(a | s,(a_1,\hdots,a_i,a))
\end{equation}
and 
$\Pr(a |\ s,(a_1,\hdots,a_i,a))\leq \Pr(a |\ s,(a_1,\hdots,a_{i-1},a)).$
\end{definition}

\begin{proposition}
If a slate-MDP has sequential presentation and satisfies the fatal failure property then its state-slate value function $Q^\pi$ is monotonic and submodular for all $\pi$.
\end{proposition}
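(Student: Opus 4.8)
The plan is to start from the factored value function \eqref{eqB} and first collapse it to a sum over the actions actually present in the slate. Writing $\Qe(s,a,s')=\Pr(s'\mid s,a)(\bar{\R}(s,a,s')+\gamma V^{\pi}(s'))$ and setting $q(s,a)=\sum_{s'\in\S}\Qe(s,a,s')$, I would first record that $q(s,a)\geq 0$, since rewards are non-negative and hence $V^{\pi}\geq 0$. Under the fatal failure property, any executed action reaching a state $s'$ with $\psi(s')\notin\a$ sends the episode to the zero-reward absorbing state $s_{\rm end}$, so every such term contributes $\Qe(s,a,s')=0$; this is exactly the observation that $\sum_{a\in\A}$ may be replaced by $\sum_{a\in\a}$ in \eqref{eqB}. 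The outcome is the representation $Q^{\pi}(s,\a)=\sum_{a\in\a}\Pr(a\mid s,\a)\,q(s,a)$, in which the nonnegative weights $q(s,a)$ depend only on $s$ and $a$ and not on the slate.

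Next I would isolate the marginal contribution of appending one action. Consider a prefix $P=(a_1,\dots,a_i)$ and the extended slate $(P,a)$. For each $a_j$ in $P$, the condition \eqref{eq:seqpr} gives $\Pr(a_j\mid s,(P,a))=\Pr(a_j\mid s,(a_1,\dots,a_j))=\Pr(a_j\mid s,P)$, since $a$ is appended after $a_j$ and therefore cannot alter its execution probability. Hence the terms for $a_1,\dots,a_i$ coincide in $Q^{\pi}(s,(P,a))$ and $Q^{\pi}(s,P)$, leaving the clean marginal-gain identity
\[ Q^{\pi}(s,(P,a))-Q^{\pi}(s,P)=\Pr(a\mid s,(P,a))\,q(s,a). \]
Monotonicity is then immediate, because this marginal gain is a product of two nonnegative quantities.

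For submodularity I would apply the marginal-gain identity to the two prefixes $(a_1,\dots,a_i)$ and $(a_1,\dots,a_{i-1})$ that appear in the definition, obtaining gains $\Pr(a\mid s,(a_1,\dots,a_i,a))\,q(s,a)$ and $\Pr(a\mid s,(a_1,\dots,a_{i-1},a))\,q(s,a)$ respectively. The second (monotonicity) condition of the sequential-presentation definition states $\Pr(a\mid s,(a_1,\dots,a_i,a))\leq \Pr(a\mid s,(a_1,\dots,a_{i-1},a))$, and multiplying this inequality by $q(s,a)\geq 0$ yields exactly the diminishing-returns inequality required.

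The algebra is short once the representation is in hand, so I expect the main work to lie in the first paragraph: justifying the fatal-failure collapse to $\sum_{a\in\a}$ and, in particular, that $q(s,a)$ is both nonnegative and slate-independent. A secondary point to handle carefully is slates with repeated actions, where the reduced sum should be read as a sum over slate positions with the corresponding per-position execution probability, so that the appending argument of the second paragraph continues to apply verbatim.
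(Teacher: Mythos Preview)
Your proposal is correct and follows essentially the same route as the paper: reduce $Q^\pi$ via fatal failure to a sum over slate actions, use \eqref{eq:seqpr} to show that appending $a$ leaves the execution probabilities of earlier slots unchanged so that the marginal gain is $\Pr(a\mid s,(P,a))\,q(s,a)$, and then read off monotonicity from nonnegativity and submodularity from the second sequential-presentation inequality. Your write-up is in fact a bit more explicit than the paper's (you introduce $q(s,a)$, spell out why the prefix terms cancel, and flag the repeated-action issue), but the underlying argument is identical.
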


\begin{proof}
Let $\mba_k = (a_1, \hdots, a_k)$. For any vector $\mba$ and any scalar $a$, let $\mba a$ denote the vector
constructed by concatenating $a$ to $\mba$.
Assume that $a\not\in \mba_i$. Also the rewards are nonnegative. Then, we have that $Q(s, \mba_i a) - Q(s, \mba_i) 
 =$ $$\sum_{s'\in\cS  a'\in\mba_ia} \Pr(a'|s,\mba_i a)\Qe(s, a', s') -
\sum_{s'\in\cS  a'\in\mba_i} \Pr(a'|s,\mba_i)\Qe(s, a', s')=$$ $$ 
 \sum_{s'\in\cS  a'\in\mba_i}\ 
\biggl[\Pr(a'|s,\mba_i a) - \Pr(a'|s,\mba_i) \biggr]
\Qe(s, a', s') 
+ \Pr(a|s,\mba_i a)\Qe(s, a, s') = $$ $$\Pr(a|s,\mba_i a)\Qe(s, a, s').$$
Sequential presentation immediately implies that $\Pr(a|s,\mba_i a) \leq \Pr(a|s,\mba_{i-1} a)$.
This establishes that $Q(s, \mba)$ is indeed submodular in $\mba$.  Monotonicity follows from \eqref{eq:seqpr}.
\end{proof}

The next section introduces agents based on the theory of this section. They learn the value of full slates and select a slate through a sequentially greedy procedure which under the sequential presentation assumption combined with EIB, is potentially performing slightly worse than combinatorial search. Further, motivated by EIB, training is performed on a modified environment for which fatal failure is satisfied.

\paradot{Slate agents}
\begin{algorithm}[]\label{simpleAgent}
\begin{algorithmic}[1]
\REQUIRE trainSteps, testSteps, update, $\varepsilon\geq 0$, $l\geq 1$
\STATE $t=1$, initialize $\theta$ for $Q_\theta(s,a)$
\STATE Receive initial state $s$ and take random action $a$ 
\STATE Receive reward $r$ and state $s'$. 
\REPEAT 
\STATE Pick $a'$ $\varepsilon$-greedily (slate size 1) from $Q_\theta(s',\cdot)$
\STATE Update $\theta$ using update$(s,a,r,s',a')$
\STATE  $s=s'$, $a=a'$
\STATE Perform action $a$ in environment (slate size $1$)
\STATE Receive new state $s'$ and reward $r$ 
\STATE $t=t+1$
\UNTIL $t\geq$ trainSteps

\STATE $t=1$
\STATE Receive initial state $s$ 
\REPEAT
\STATE Sort the available actions such that $Q_\theta(s,a_i)\geq Q_\theta(s,a_{i+1})\ \forall i$ 
\STATE Take slate-action $\a=(a_1,..,a_l)$ 
\STATE Receive reward $r$ and state $s'$
\STATE $t=t+1$, $s=s'$
\UNTIL $t\geq$ testSteps

\end{algorithmic}
\caption{Generic Simple (Top-$K$) Slate Agent}
\end{algorithm}
We consider model-free agents that directly learn either the value of an individual action (Algorithm \ref{simpleAgent}) or the value of a full slate (Algorithm \ref{fullAgent}). We perform the action selection for the latter in a way that only considers dependence on the actions in slots above. However, we still learn a value function which depends on a whole slate by using value function approximators that take both the features of the state and all the actions as arguments. We perform the maximization in a sequentially greedy manner and fill slots following the one being maximized with the same action that is being evaluated, while keeping previous ones fixed. Both Algorithm 1 and Algorithm 2 are stated in a generic manner while in our experiments we include useful techniques from \cite{dqn15,ddpg} that stabilize and speed up learning, namely experience replay and target networks as in Algorithm \ref{dpgalgo}. Algorithm \ref{simpleAgent} is presented in two phases; One with training using slate size $1$ and one testing with slate size $l$. In our experiments we interleave test and training phases.

\begin{algorithm}[h]\label{fullAgent}
\begin{algorithmic}[1]
\REQUIRE Steps, update, $\varepsilon\geq 0$, $l\geq 1$
\STATE $t=1$, initialize weights $\theta$ for $Q_\theta(s,\bar{a})$
\STATE Receive initial state $s$ and take random slate-action $\bar{a}$ 
\STATE Receive reward $r$ and state $s'$. 
\REPEAT 
\FOR{i=1,l}
\STATE Set $a_i=\argmax_{a\in \A(s')} Q_\theta(s', a_1,\hdots,a_{i-1},a,a,\hdots)$ 
\ENDFOR
\STATE $\a'=(a_1,\hdots,a_l)$

\STATE Update $\theta$ using update$(s,\a,r,s',\a')$
\STATE  $s=s'$, $\a=\a'$
\STATE Perform slate-action $\a$ in environment 
\STATE Receive new state $s'$ and reward $r$ 
\STATE $t=t+1$
\UNTIL $t\geq$ Steps

\end{algorithmic}
\caption{Generic Full Slate}
\end{algorithm}

\paradot{Deterministic Policy Gradient (DPG) Learning of Slate Policies to Guide Attention}
To decrease the number of evaluations of the value function when choosing a slate, we attempt to learn an attention guiding policy that is trained to produce a slate that maximizes the learned value function as seen in Algorithm \ref{dpgalgo}, which generalizes Algorithm \ref{fullAgent} in which candidate actions are used as the nearest neighbors. The policy is optimized using gradient ascent on its parameters for $Q\circ\pi$ as a function of those and the state. 

The main extra issue, besides the much higher dimensionality, compared to existing deterministic policy gradient work \cite{silver14,ddpg}, is that instead of using the continuous action produced by the neural network, we must choose from a discrete subset. We resolve this by performing a $k$-nearest neighbor lookup among the available actions and either execute the nearest or evaluate $Q$ for all the identified neighbors and pick the highest valued action. We introduce this approach in fuller detail and further developed for large action space in \cite{Gabe}. The policy is still updated in the same way since we simply want it to produce vectors with as high $Q$-values as possible. However, when we also learn $Q$ we update based on $Q(s,a)$ for the action actually taken. As in \cite{silver14,ddpg} the next action used for the TD-error $(Q(s,a)-r-\gamma Q'(s',a'))$ is the action $\pi(s')$ produced by the current target policy. To perform a nearest neighbor look-up for slates we focus on a slot at a time. We use the sequentially greedy maximization defined in Algorithm \ref{fullAgent} but for each slot the choices are further restricted to only consider the result of that look-up.

\begin{algorithm}[h]
  \caption{DPG+kNN \label{algo:ddpg}}
  \label{dpgalgo}
  \begin{algorithmic}[1]
    \STATE Randomly initialize $Q(s, a | \theta^Q)$ and policy
    $\pi(s | \theta^{\pi})$\\ with weights $\theta^{Q}$ and $\theta^{\pi}$.
    \STATE Initialize target network $Q'$ and $\pi'$ with weights $\theta^{Q'}
    \leftarrow \theta^{Q}$, $\theta^{\pi'} \leftarrow \theta^{\mu}$
    \STATE Initialize replay buffer $B$

      \STATE Receive initial observation state $s_1$
      \FOR{t = 1, T}
        \STATE With probability $1-\varepsilon$ select action $a_t$ as $\argmax_a Q(s, a | \theta^Q)$ where $a$ \\ranges across the\\ $k$ nearest candidate actions of $\pi(s_t | \theta^{\mu})$, and with probability $\varepsilon$ a random candidate action. For full\\ slate agents, $\argmax$ is replaced by sequentially greedy maximization as in Algorithm 2.
        \STATE Perform $a_t$, receive
        reward $r_t$ and new state $s_{t+1}$
        \STATE Store transition $(s_t, a_t,
                r_t, s_{t+1})$ in $B$
        \STATE Sample $(s, a,
        r, s')$ from $B$ and choose $a'$ as $a_t$\\ was chosen above but with $s'$, $Q'$ and $\pi'$.
        \STATE Set $ y = r + \gamma Q'(s',a') | \theta^{Q'}) $
            
        \STATE Update Q by gradient updates for the loss:
               $L = (y - Q(s, a | \theta^Q)^2$
        \STATE Update the policy $\pi$ using the sampled gradient:
        \begin{equation*}
            \nabla_{\theta^{\pi}} Q\circ\pi|_{s} \approx
               \nabla_{a} Q(s, a | \theta^Q)|_{\pi(s)}
               \nabla_{\theta^\pi} \pi(s | \theta^\pi)|_{s}
         \end{equation*}
        \STATE Update the target networks:
          \begin{equation*}
            \theta^{Q'} \leftarrow \tau \theta^{Q} + (1 - \tau) \theta^{Q'}
          \end{equation*}
          \begin{equation*}
            \theta^{\pi'} \leftarrow \tau \theta^{\pi} +
                (1 - \tau) \theta^{\pi'}
          \end{equation*}
        \ENDFOR
   
  \end{algorithmic}
\end{algorithm}

\section{Experimental comparison}\label{sec:exps}
We perform an experimental comparison of a range of agents described in the previous section on a test environment of a generic template. The examples used in this study, have respectively $835$, $1597$, and $13138$ states and actions represented by $100$-dimensional vectors for $835$ and $1597$ and $200$-dimensional for $13138$.

\paradot{The Test Environment:}
The test environments are such that $\A=\S=\{1,..,N\}$ and $N$ varies with the environment. An environment is defined by a transition weight matrix (from a real recommendation system) such that for each state $i\in\{1,\hdots,N\}$ and each action $j\in\{1,..,N\}$ there is a real valued non-negative weight $w_{i,j}$ indicating how common it is that $j$ follows $i$. For each $i$, only a limited number (larger than zero and at most $60$) of $w_{i,j}$ are non-zero and the magnitude of a typical non-zero weight is $0.5$. We refer to those $j$ for which $w_{i,j}>0$ as the candidate actions for state $i$. Further, there is a weight $w_{\rm fail}>0$. 

The weight matrix represents a weighted directed graph, which is extracted as a subgraph from a very large full graph of the system, by choosing a seed node and performing a breadth first traversal to a limited depth and then pruning childless nodes in an iterative manner. There is also a reward $r_j\geq 0$ for each state $j$ that is received upon transition to that state. 

When an agent in state $s$ produces a slate $\a$, each action $a\in\a$ has a probability of being executed that is proportional (not counting duplicates) to $w_a\log_2(i+1)$ (standard discount in information retrieval \cite{jarvelin02,croft10})  where $i$ is the position in the slate and the probability that no action is executed is proportional to $w_{\rm fail}$. If no action from the slate is executed, the environment transitions to a uniformly random next state and the agent receive the corresponding reward. After this transition, there is a probability (here $0.2$) of the episode ending. If $a\in\a$ was executed the environment transitions to $s'=a$, the reward is received and the episode ends with a fixed probability (here $0.1$).

\paradot{The Agents}
For all agents' $Q$-functions, we use function approximators that are feed-forward neural networks with two hidden layers, each with a $100$ units.  The policies are feed-forward neural networks with two hidden layers with $25$ hidden units each. Fewer units suffice since we only need an approximate location of high values. We use learning rate $\eta=10^{-3}$ and target network update rate $\tau=10^{-4}$. In line with theory presented in the previous section, training is performed on a modified version of the environment in which the episode ends with zero reward when an action not from the slate is executed. The update routine is a gradient step on a squared ($L_2$) loss between $Q_\theta(s_t,a_t)$ and $r_t+\gamma Q'(s_{t+1},a')$, where $Q'$ is the target $Q$-network and $a'$ the action produced by the target policy network at state $s_{t+1}$. The target network parameters $\theta'$ slowly track $\theta$ through the update $\theta'_{t+1}=(1-\tau)\theta_t'+\tau\theta$ and similarly for the target policy network. Algorithm 3 details these procedures.

\paradot{Evaluation}
We evaluate our full slate agents and simple top-$K$ agents on the three environments with slate sizes $1$, $5$ and $10$. The full slate agents are evaluated in three variations with different number of actions in the slate. The cheapest version (in number of evaluations) immediately picks the action, for each slot, whose features are nearest (in $L_2$ distance) to the vector produced by the policy. The most expensive agent considers all candidate actions and we also evaluate an agent that, for each slot, only considers the $10\%$ nearest. We ran each experiment $6$ times with different random seeds and plotted the average total reward per episode (averaged both over seeds and $1000$ episodes at evaluation with $\varepsilon=0$) in Figures 2-7 for which a moving average with window length $100$ has also been employed. The error bars show one standard deviation. Figures $2$, $3$ and $4$ compares different number of neighbors for $N=835$, $N=1597$ and $13138$. Figures $5$, $6$ and $7$ compare full and simple agents at different slate sizes for the same environments.

\paradot{Results}
We see the full slate agents performing much better overall than the simple top-$K$ agents that we employ as a baseline. The baseline is relevant since agents used in recommendation systems are often of that form (based on an unrealistic independence assumption as in \cite{Sha05}) although they typically focus on recommendations being accepted \cite{Park12}. Unlike the simple agents, full slate agents always perform well for larger slate sizes. The simple agents are unable to learn to avoid including actions with high weight but with lower value than the top pick. For slate size $1$, the simple top-$K$ agent coincides with the full slate agent which evaluates all candidate actions, hence these two agents are shown as one agent. Further, we can see that the curve for agents that only evaluate $10\%$ of the candidate actions is almost identical to the one for agents that evaluate all. The nearest neighbor agent that simply picks the nearest neighbor is slightly worse and has larger variability than the other two. However, as we demonstrate in a further experiment that also highlight the ability to learn non-myopically, the nearest neighbor agent can outperform the other agents. The variability of the nearest neighbor agent aids exploration and the attention can help when $Q$ is not estimated well everywhere. 

\paradot{Risk-Seeking}
In the case of the $N=13138$ environment in particular, it is possible to perform much better than we have already seen. In fact the performance seen in Figure 4 only reaches the performance of the optimal myopic policy. For this environment there are far better policies. We perform a simple modification to our agent (slate size $1$, all neighbors) to make it more likely to discover multi-step paths to high reward outcomes. We transform the reward that the agent is training on by replacing $r$ with $r^\alpha$, while still evaluating with the orginal reward. We see that for a wide range of exponents we eventually see far superior performance compared to $\alpha=1$. We refer to the agents with $\alpha>1$ as risk-seeking in line with prospect theory \cite{Kahneman79}.

\begin{figure}[h!]

\includegraphics[width=10cm]{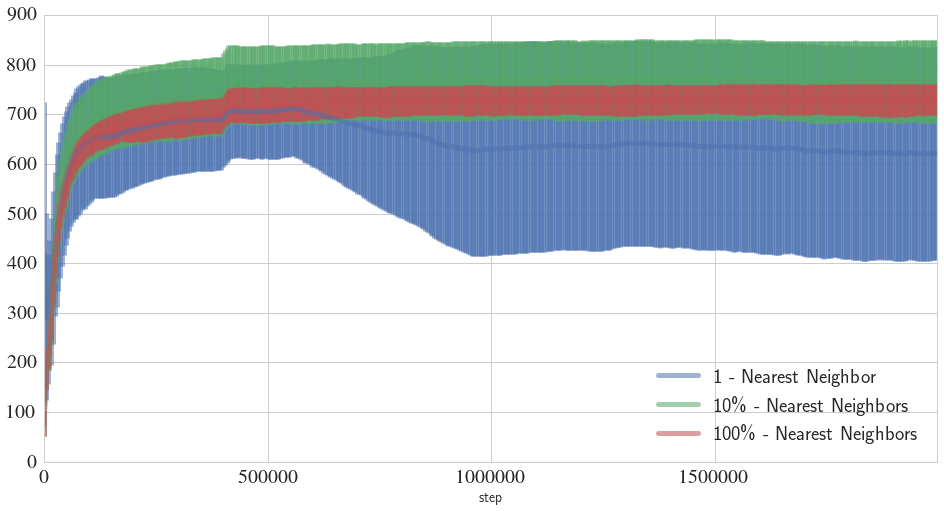}
\caption{Evaluation with different numbers of neighbors and slate size $1$ on environments with $N=835$.}
\end{figure}

\begin{figure}[h!]

\includegraphics[width=10cm]{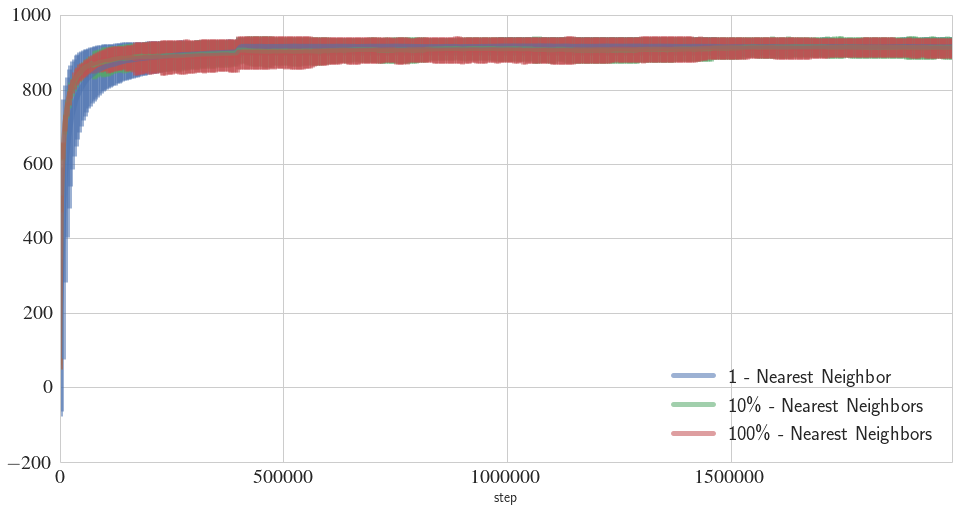}
\caption{Evaluation with different numbers of neighbors and slate size $1$ on environments with $N=1597$.}
\end{figure}

\begin{figure}[!]
\includegraphics[width=10cm]{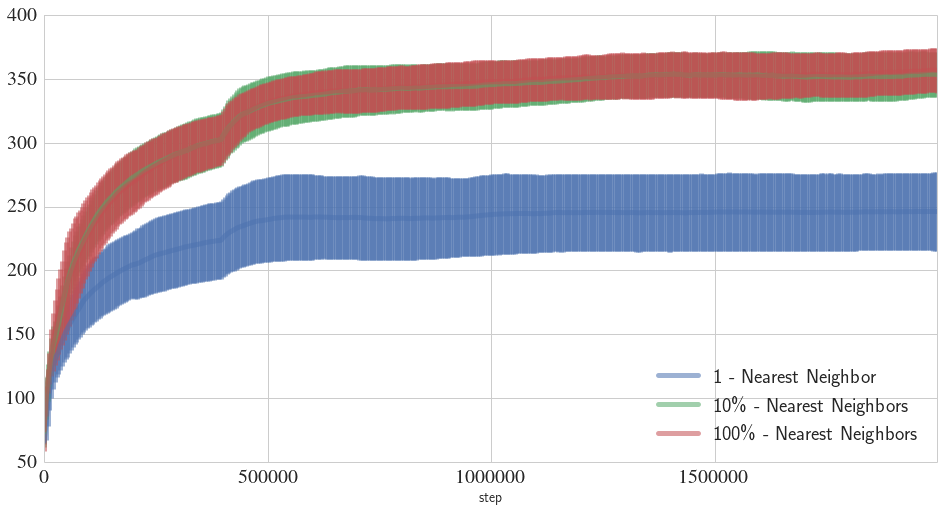}
\caption{Evaluation with different numbers of neighbors and slate size $1$ on environments with $N=13138$.}
\end{figure}

\begin{figure}[!]

\includegraphics[width=10cm]{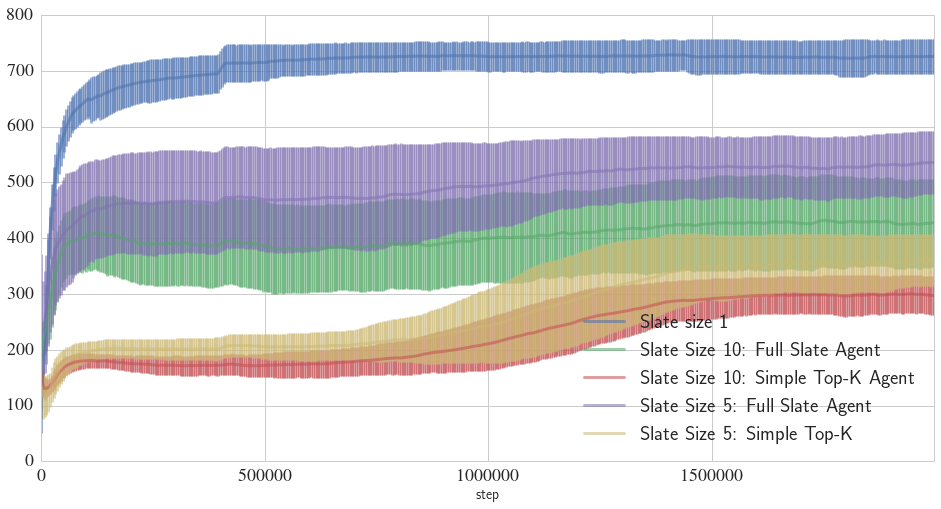}
\caption{Full and simple agents with on environments with $N=835$ and slate sizes $1$, $5$ and $10$.}
\end{figure}

\begin{figure}[!]

\includegraphics[width=10cm]{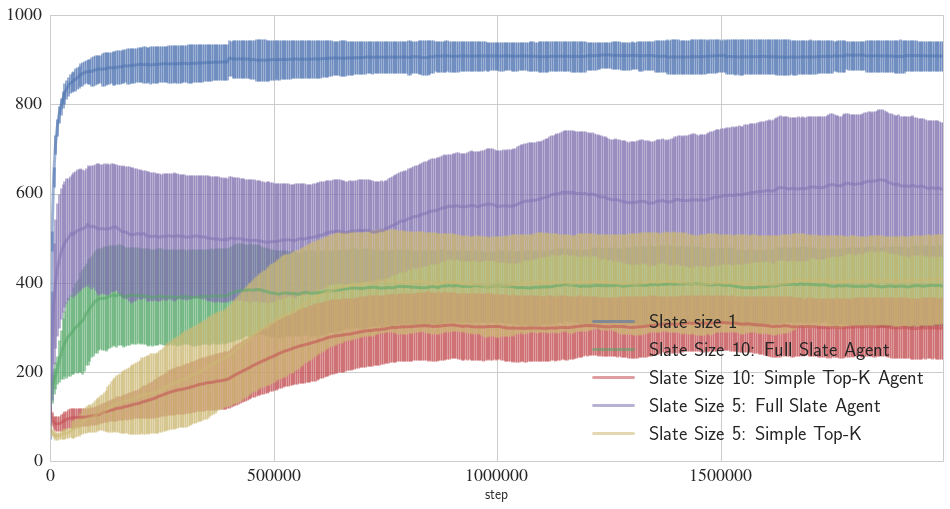}
\caption{Full and simple agents with on environments with $N=1597$ and slate sizes $1$, $5$ and $10$.}

\end{figure}

\begin{figure}[h!]
\includegraphics[width=10cm]{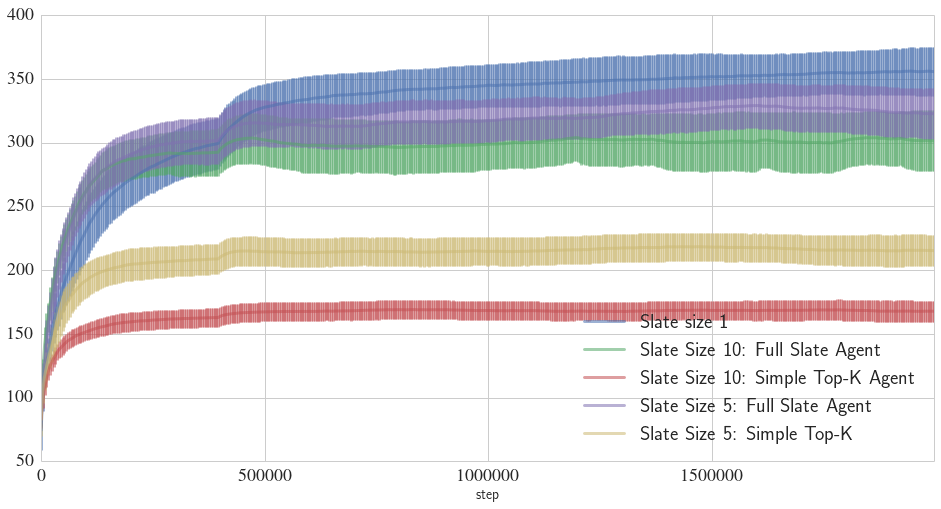}
\caption{Full and simple agents on environments with $N=13138$ and slate sizes $1$, $5$ and $10$.}
\end{figure}

\FloatBarrier

\begin{figure}[h!]
\includegraphics[width=10cm]{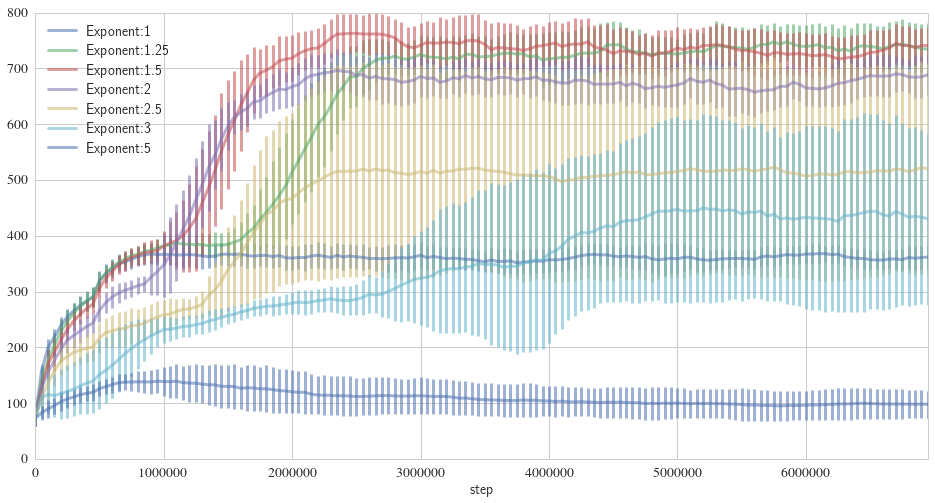}
\caption{Agent with slate size 1 evaluating all candidate actions, updating based on $r^\alpha$. Environment with $N=13138$.}
\end{figure}

\section{Conclusions}
We introduced agents that successfully address sequential decision problems with high-dimensional combinatorial slate-action spaces, found in important applications including recommendation systems. We focus on slate Markov Decision Processes introduced here, providing a formal framework for such applications. The new agents' superiority over relevant baselines was demonstrated on a range of environments derived from real world data in a live recommendation system.


\newcommand{\etalchar}[1]{$^{#1}$}

\end{document}